\def\BState{\State\hskip-\ALG@thistlm}
\newtheorem{thm}{Theorem}[section]
\newtheorem{lem}[thm]{Lemma}
\newtheorem{cor}[thm]{Corollary}
\newtheorem{prop}[thm]{Proposition}
\newtheorem{question}[thm]{Question}
\newtheorem{Defn}[thm]{Definition}
\def\<{\langle}
\def\>{\rangle}
\def\0{{\mathbf 0}}
\DeclareMathOperator*{\argmin}{arg\,min}
\title{Asymptotic Logical Uncertainty and The Benford Test}
\author{\normalsize{Scott Garrabrant\textsuperscript{1,2}, Siddharth Bhaskar\textsuperscript{1}, Abram Demski\textsuperscript{2,3},} \\ 
\normalsize{\bf{Joanna Garrabrant}, \bf{George Koleszarik}, \and \bf{Evan Lloyd}\textsuperscript{1}} \\
\textsuperscript{1}University of California, Los Angeles \\
\textsuperscript{2}Machine Intelligence Research Institute \\
\textsuperscript{3}University of Southern California
}
\begin{document}

\publishingnote{Technical Report 2015--11.}

\maketitle

\begin{abstract}
We give an algorithm $A_{L,T}$ which assigns probabilities to logical sentences. For any simple infinite sequence $\{\phi_{s_n}\}$ of sentences whose truth-values appear indistinguishable from a biased coin that outputs ``true'' with probability $p$, we have ${\lim_{n\rightarrow\infty} A_{L,T}(s_n)=p.}$
\end{abstract}

\section{Introduction}
Let $\phi_1,\phi_2,\ldots$ be a simple enumeration of all sentences in first order logic over ZFC. The goal of logical uncertainty is to construct an algorithm $M$ which on input $N$ outputs a probability~$M(N)$, which represents the probability that $\phi_N$ is true \cite{Christiano:2014a,Demski:2012,Gaifman:2004,Soares:2015g}.\footnote{The problem has also been studied in the case where we don't require computability even in the limit \cite{Gaifman:1964,Hutter:2013,Scott:1966}. The problem was first studied in the context of measures on Boolean algebras \cite{Horn:1948,Kelley:1959,Maharam:1947}.} This notion of probability does not refer to random variables. It refers to the degree of uncertainty that one might have about logical sentences whose truth-values have not been calculated.

Much work has been done on a related problem where $M$ on input $N$ outputs an infinite sequence of numbers and $M(N)$ is defined to be the limit of the sequence output by $M$ on input $N$ \cite{Christiano:2014a,Demski:2012,Soares:2015}. In this case, $M(N)$ is not computable, and can easily be $1$ for all provable $\phi$ and $0$ for all disprovable $\phi$, so all of the work is in figuring out how $M$ should behave when $\phi$ is independent of ZFC.

In this paper, we take a different approach, which we call \emph{asymptotic logical uncertainty}. We require that $M(N)$ be computable and have runtime bounded by some function of $N$. 

We propose as a baseline that any method of quickly assigning probabilities should be able to pass a test we call the \emph{Benford test}. Consider the infinite sequence of sentences $\{\phi_{s_n}\}$ given by $\phi_{s_n}=$ ``The first digit of $3\uparrow^n 3$ is a 1.'' We say that $M$ passes the Benford test if $$\lim_{n\rightarrow\infty}M(s_n)=\log_{10}(2)\approx.30103,$$
as prescribed by Benford's law. More generally, we say that $M$ passes the generalized Benford test if it converges to the correct probability on any similar infinite sequences whose truth values appear indistinguishable from independent flips of a biased coin.
We then give an algorithm $A_{L,T}$ which passes the generalized Benford test.

Logical uncertainty is one aspect of the problem of combining probability and logic, of which \emph{statistical relational learning} is another \cite{Getoor:2007}. Statistical relational learning addresses the problem of representing probabilistic models with logical structure, including regularities such as repeated entities and other complexities such as uncertainty about the number of entities. In contrast, logical uncertainty deals with uncertainty \emph{about} logic. As Paul Christiano put it: ``any realistic agent is necessarily uncertain not only about its environment or about the future, but also about the logically necessary consequences of its beliefs.'' \cite{Christiano:2014a}

\section{The Benford Test}
Benford's law states that in naturally occurring numbers, the leading digit $d\in\{1,\ldots,9\}$ of that number in base 10 occurs with probability $\log_{10}(1+\frac{1}{d})$. Many mathematical sequences have been shown to have frequencies of first digits that satisfy Benford's law \cite{Pietronero:2001}. In particular, the frequencies of the first digits of powers of $3$ provably satisfy Benford's law.

The function $3\uparrow^n k$ is defined by $3\uparrow^1 k=3^k$, $3\uparrow^n 1=3$, and ${3\uparrow^{n}k=3\uparrow^{n-1}(3\uparrow^{n}(k-1))}$. Throughout the paper, let $T(N)$ be an increasing time complexity function in the range of $N\leq T(N)\leq 3\uparrow^k N$ for some fixed $k$, and let $R(N)=T(N)N^4\log T(N)$. 

Consider the sequence $3\uparrow^n 3$. Clearly this sequence only contains powers of 3. We might hypothesize that the frequencies of the first digits in this sequence also satisfy Benford's law. However, $3\uparrow^n 3$ is very large, and first digit of $3\uparrow^n 3$ is probably very difficult to compute. It is unlikely that the first digit of $3\uparrow^3 3$ will ever be known.

If asked to quickly assign a probability to the sentence $\phi_{s_n}=$ ``The first digit of $3\uparrow^n 3$ is a 1,'' for some $n>2$, the only reasonable answer would be $\log_{10}(2)\approx.30103$. Note that $\phi_{s_n}$ is either true or false; there are no random variables. The probability here represents a reasonable guess in the absence of enough time or resources to compute $3\uparrow^n 3$.
\begin{Defn}\label{ben}
Let $M$ be a Turing machine which on input $N$ runs in time $O(R(N))$ and outputs a probability $M(N)$, which represents the probability assigned to $\phi_N$. We say that $M$ passes the Benford test if
$$\lim_{n\rightarrow\infty}M(s_n)=\log_{10}(2),$$
where $\phi_{s_n}=$ ``The first digit of $3\uparrow^n 3$ is a 1.''
\end{Defn}

It is easy to pass the Benford test by hard-coding in the probability. It is more difficult to pass the Benford test in a natural way. That the best probability to assign to $\phi_{s_n}$ is $\log_{10}(2)$ depends not only on the fact that the frequency with which $\phi_{s_n}$ is true tends toward $\log_{10}(2)$, but also on the fact that the sequence of truth-values of $\phi_{s_n}$ contains no patterns that can be used to quickly compute a better probability on some subsequence. We therefore assume that this sequence of truth-values is indistinguishable from a sequence produced by a coin that outputs ``true'' with probability $\log_{10}(2)$. Formally, we are assuming that ${S=\{s_n|n\in\mathbb{N}\}}$ is an {\em irreducible pattern} with probability $\log_{10}(2)$, as defined in the next section.

\section{Irreducible Patterns}\label{IPS}

Fix a universal Turing machine $U$ and an encoding scheme for machines, and let $U(M,x)$ denote running the machine $U$ to simulate $M$ with input $x$.

\begin{Defn}\footnote{We tailored this definition of irreducible pattern to our needs. The theory of algorithmic randomness may offer alternatives. However, algorithmic randomness generally considers all computable tests and focuses on the case where $p=\frac{1}{2}$ \cite{Ko:1986,Martin-Lof:1966,Downey:2010}. We believe that any reasonable definition inspired by algorithmic randomness would imply Definition \ref{IP}.} \label{IP}
Let $S\subseteq\mathbb{N}$ be an infinite subset of natural numbers such that $\phi_N$ is provable or disprovable for all $N\in S$, and there exists a Turing machine $Z$ such that $U(Z,N)$ runs in time $T(N)$ and accepts $N$ if and only if $N\in S$.

We say that $S$ is an \emph{irreducible pattern} with probability $p$ if there exists a constant $c$ such that for every positive integer $m\geq 3$ and every Turing machine $W$ expressible in $k(W)$ bits, if 
$$S'=\{N\in S\ |\ U(W,N)\text{ accepts in time }T(N)\}$$ has at least $m$ elements and $r(m,W)$ is the probability that $\phi_N$ is provable when $N$ is chosen uniformly at random from the first $m$ elements of $S'$, we have 
$$|r(m,W)-p|<\frac{c{k(W)}\sqrt{\log\log m}}{\sqrt{m}}.$$
\end{Defn}

\noindent The intuition behind the formula is that the observed frequency $r(m,W)$ for any sequence $S'$ we select should not stray far from $p$. The right hand side of the inequality needs to shrink slowly enough that a true random process would stay within it with probability 1 (given choice of $c$ sufficiently large to accommodate initial variation). The law of the iterated logarithm gives such a formula, which is also tight in the sense that we cannot replace it with a formula which diminishes more quickly as a function of $m$.

\begin{prop}
If we replace provability in Definition \ref{IP} with a random process, such that for each $N\in S$ the sentence $\phi_N$ is independently called ``provable'' with probability $p$, then $S$ would almost surely be an irreducible pattern with probability $p$.
\end{prop}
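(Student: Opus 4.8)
Fix the set $S$ and enumerate it as $N_1<N_2<\cdots$; the hypothesis is that the indicators $Y_i=\mathbf{1}\{\phi_{N_i}\text{ is called ``provable''}\}$ are i.i.d.\ $\mathrm{Bernoulli}(p)$. The first thing I would observe is that for any machine $W$ the set $S'=S'_W$ is \emph{deterministic}: membership depends only on whether $U(W,N)$ accepts within time $T(N)$, not on the coin flips. Hence if $S'_W$ enumerates as $N_{i_1}<N_{i_2}<\cdots$, then $r(m,W)=\frac1m\sum_{l\le m}Y_{i_l}$ is just the empirical mean of the i.i.d.\ $\mathrm{Bernoulli}(p)$ subsequence $Y_{i_1},Y_{i_2},\dots$. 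Writing $B_c$ for the event that some admissible pair $(m,W)$ (meaning $m\ge3$ and $|S'_W|\ge m$) violates the inequality of Definition~\ref{IP} for that constant $c$, the statement ``$S$ is an irreducible pattern'' is exactly $\bigcup_c\overline{B_c}$. Since $\overline{B_c}$ increases with $c$, it suffices to show $\Pr[B_c]\to0$ as $c\to\infty$; monotone convergence then gives probability $1$.

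The attack is a union bound, but the naive version over all $(m,W)$ diverges: for fixed $W$, Hoeffding gives $\Pr[\,|r(m,W)-p|\ge ck\sqrt{\log\log m}/\sqrt m\,]\le2(\log m)^{-2c^2k^2}$, and $\sum_m(\log m)^{-\alpha}=\infty$ for every $\alpha$ — this non-summability is precisely the tightness of the law of the iterated logarithm. I would therefore import the blocking device from the proof of the LIL. Fix $\lambda>1$ and cut $[3,\infty)$ into blocks $[m_j,m_{j+1})$ with $m_j=\lceil3\lambda^j\rceil$. Since $m\mapsto m\log\log m$ is increasing, a violation anywhere in block $j$ forces $\max_{m<m_{j+1}}|S_m-mp|\ge t_j$ with $t_j=ck(W)\sqrt{m_j\log\log m_j}$, where $S_m=\sum_{l\le m}Y_{i_l}$. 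Applying Doob's inequality to the exponential submartingale $e^{\theta(S_m-mp)}$ and symmetrising bounds this by $2\exp(-2t_j^2/m_{j+1})$; as $m_{j+1}\le\lambda' m_j$ for an absolute $\lambda'$, this is at most
\begin{equation*}
2\,(\log m_j)^{-2c^2k(W)^2/\lambda'}.
\end{equation*}

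Now I would sum. Because every block has $m_j\ge3$, one has $\log m_j\ge\log3>1$ and $\log m_j\ge\log3+j\log\lambda$, so for $c$ past an absolute constant the series over $j$ converges and, as a function of $\beta=2c^2k(W)^2/\lambda'$, is bounded by $C'(\log3)^{-\beta}=C'\exp(-\gamma c^2k(W)^2)$ for constants $C',\gamma>0$ (with $\gamma=2(\log\log3)/\lambda'>0$, using natural logarithms). There are at most $2^{k}$ machines $W$ with $k(W)=k$, so
\begin{equation*}
\Pr[B_c]\;\le\;\sum_{k\ge1}2^{k}\,C'\exp(-\gamma c^2k^2)\;=\;C'\sum_{k\ge1}\exp\!\big(k\log2-\gamma c^2k^2\big).
\end{equation*}
For large $c$ the exponent is negative for every $k\ge1$ and tends to $-\infty$ quadratically, so the series converges and, term by term, vanishes as $c\to\infty$; dominated convergence gives $\Pr[B_c]\to0$, completing the argument.

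The crux is the interplay between the two quantifiers, over $m$ and over $W$. The tightness of the LIL forbids any cruder bound on a single subsequence, which is what makes the blocking-plus-maximal-inequality step unavoidable rather than cosmetic. The genuinely new ingredient beyond the classical LIL is the union over the infinitely many machines, and it succeeds only because the threshold carries the factor $k(W)$: this turns the per-machine failure probability into $\exp(-\gamma c^2k^2)$, whose super-exponential (in $k$) decay is exactly what defeats the $2^k$ machines of length $k$. I expect this counting-versus-concentration balance to be the main thing to get right. Note that only the upper half of the LIL is used here; the matching lower half — which says $\sqrt{\log\log m}$ is the fastest rate a true random sequence can respect — is what makes Definition~\ref{IP} the ``right'' notion, but plays no role in this proposition.
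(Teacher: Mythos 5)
Your proof is correct, and it takes a genuinely different route from the paper's. The paper invokes the law of the iterated logarithm as a black box to get that $\Phi(W)=\sup_m |mr(m,W)-mp|/\sqrt{m\log\log m}$ is a.s.\ finite, then converts that qualitative fact into a quantitative tail bound $\mathbb{P}(\Phi(W)\geq 2^\ell c_3)\leq \varepsilon^{2^\ell}$ by a restart-at-the-first-crossing (``chain rule'') argument, and finally takes a union bound over machines grouped dyadically by code length, balancing the $2^{2^{\ell+1}}$ machines against the $\varepsilon^{2^\ell}$ failure probability to get $\sum_\ell (4\varepsilon)^{2^\ell}$. You instead re-derive the upper half of the LIL quantitatively from scratch --- blocking at geometric scales plus the exponential maximal inequality --- obtaining an explicit tail $\mathbb{P}(\Phi(W)>ck)\lesssim \exp(-\gamma c^2 k^2)$, and then union over the $2^k$ machines of each exact length $k$. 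Both arguments turn on the same structural point, which you correctly identify as the crux: the factor $k(W)$ in the threshold makes the per-machine failure probability decay fast enough (super-exponentially in $k$) to beat the exponential count of machines. What your version buys: it is self-contained and fully quantitative, it makes explicit the (needed but unstated in the paper) observation that $S'_W$ is deterministic so that $r(m,W)$ really is an i.i.d.\ empirical mean, and it avoids the paper's conditioning step, which is the most delicate part of their proof (it relies on the strong Markov property at the first crossing time and is slightly careless about $\log\log(m_2-m_1)$ when the gap $m_2-m_1$ is small). What the paper's version buys is brevity, at the cost of importing the LIL and the somewhat informal bootstrapping. Your observation that the naive per-$(m,W)$ union bound diverges, forcing the blocking device, is also accurate and well placed.
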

\begin{proof}
Fix a Turing machine $W$. By the law of the iterated logarithm, there exists a constant $c_1$ such that 
$$\limsup_{m\rightarrow\infty} \frac{|mr(m,W)-mp|}{\sqrt{m\log\log m}}=c_1$$
almost surely. Therefore $$\sup_{m} \frac{|mr(m,W)-mp|}{\sqrt{m\log\log m}}<\infty$$ almost surely. We will use $\Phi(W)$ as a shorthand for this supremum. For any $\varepsilon>0$, there therefore exists a $c_2$ such that $\mathbb{P}(\Phi(W)>c_2)\leq\varepsilon$.

We now show that $\mathbb{P}(\Phi(W)>2c_2+1)\leq\varepsilon^2$. By the chain rule for probabilities, it suffices to show that $\mathbb{P}((\Phi(W)>2c_2+1)|(\Phi(W)>c_2))\leq\varepsilon$. Assume $\Phi(W)>c_2,$ and Let $m_1$ be the first $m$ such that $$\frac{|mr(m,W)-mp|}{\sqrt{m\log\log m}}>c_2.$$ 

\noindent It suffices to show that the probability that there exists an $m_2$ with
$$\frac{|m_2r(m_2,W)-m_2p|}{\sqrt{m_2\log\log m_2}}-\frac{|m_1r(m_1,W)-m_1p|}{\sqrt{m_1\log\log m_1}} > c_2$$ is at most $\varepsilon$.

Observe that 

\begin{equation*}
\begin{split}
& \frac{|m_2r(m_2,W)-m_2p|}{\sqrt{m_2\log\log m_2}}-\frac{|m_1r(m_1,W)-m_1p|}{\sqrt{m_1\log\log m_1}}\\
& \leq\frac{|m_2r(m_2,W)-m_1r(m_1,W)-(m_2-m_1)p|}{\sqrt{(m_2-m_1)\log\log (m_2-m_1)}},
\end{split}
\end{equation*} 

and that the probability there exists an $m_2$ with
$$\frac{|m_2r(m_2,W)-m_1r(m_1,W)-(m_2-m_1)p|}{\sqrt{(m_2-m_1)\log\log (m_2-m_1)}}>c_2$$ is the same as the probability that $\Phi(W)>c_2$, which is at most $\varepsilon$.

We have thus shown that for every $\varepsilon,$ there exists a constant $c_3=c_2+1$ such that the probability that 
$\Phi(W)\geq 2^{\ell}c_3$ is at most $\varepsilon^{2^\ell}$.

Partition the set of all Turing machines into sets $\mathcal{W}_1,\mathcal{W}_2,\ldots,$ such that $\mathcal{W}_\ell$ contains all Turing machines expressed in at least $2^\ell$ but fewer than $2^{\ell+1}$ bits. The probability that a Turing machine $W$ in $\mathcal{W}_\ell$ violates \begin{equation*}|r(m,W)-p|<\frac{c_3{k(W)}\sqrt{\log\log m}}{\sqrt{m}},\tag{$\star$}\end{equation*} for any $m\geq 3$ is at most $\varepsilon^{2^\ell}$. The number of Turing machines in $\mathcal{W}_\ell$ is at most $2^{2^{\ell+1}}$, so the probability that there is any $W\in\mathcal{W}_\ell$ and $m\geq 3$ which violate 
($\star$) is at most $\varepsilon^{2^\ell}2^{2^{\ell+1}}$. Therefore, the probability that there is any Turing machine $W$ and $m\geq 3$ which violate
($\star$) is at most $$\sum_{\ell\in\mathbb{N}} \varepsilon^{2^\ell}2^{2^{\ell+1}}=\sum_{\ell\in\mathbb{N}} (4\varepsilon)^{2^\ell}.$$ For small enough $\varepsilon$ this goes to 0, so for large enough $c_3$, the probability that ($\star$) holds for all $W$ and $m$ goes to 1. Therefore, with probability 1, there exists a $c$ such that $$|r(m,W)-p|<\frac{c{k(W)}\sqrt{\log\log m}}{\sqrt{m}},$$ for all $m$ and $W$.
\end{proof}

We now use the concept of irreducible patterns to generalize the Benford test.

\begin{Defn} \label{genben}
Let $M$ be a Turing machine which on input $N$ runs in time $O(R(N))$ and outputs a probability $M(N)$, which represents the probability assigned to $\phi_N$. We say that $M$ passes the generalized Benford test if
$$\lim_{\substack{N\rightarrow\infty\\N\in S}}M(N)=p,$$%satisfies the Benford test->passes the Benford test
whenever $S$ is an irreducible pattern with probability $p$.
\end{Defn}

Note that if we conjecture that the $S$ from Definition \ref{ben} is an irreducible pattern with probability $\log_{10}(2)$, then any $M$ which passes the generalized Benford test also passes the Benford test.

\section{A Learning Algorithm}
We now introduce an algorithm $A_{L,T}$ that passes the generalized Benford test (see Algorithm 1).

Let $L$ be the Turing machine which accepts on input $N$ if ZFC proves $\phi_N$, rejects on input $N$ if ZFC disproves $\phi_N$, and otherwise does not halt. For convenience, in Algorithm 1, we define $\log q=1$ for $q<2$.

\begin{algorithm}\label{algorithm1}
\caption{$A_{L,T}(N)$}
\begin{algorithmic}[1]
\State $P=0$
\State $M=N$
\For{$j=0,\ldots, N$}
	\State $M_Y=0$ 
	\For{$Y$ a Turing machine expressible in $K_Y<\log N$ bits}
    	\State $M_X=N$
		\For{$X$ a Turing machine expressible in $K_X<\log N$ bits}
			\If{$U(X,N)$ and $U(Y,N)$ both accept in time $T(N)$}	
				\State $A=0$	
				\State $R=0$
				\State $i=1$
				\While {$i\leq N$}
					\If{$U(X,i)$ and $U(Y,i)$ both accept in time $T(i)$}
						\If{$U(L,i)$ accepts in time $T(N)$}
							\State $A=A+1$
						\ElsIf{$U(L,i)$ rejects in time $T(N)$}
							\State $R=R+1$
						\Else 
							\State $i=N$
						\EndIf
					\EndIf
					\State $i=i+1$		
				\EndWhile
				\State $F=A/(A+R)$
				\State $Q=A+R$
				\If{$\max\left({K_X},\frac{|F-\frac{j}{N}| \sqrt{Q}}{K_Y\sqrt{\log \log Q}}\right)<M_X$}
					\State $M_X=\max\left({K_X},\frac{|F-\frac{j}{N}| \sqrt{Q}}{K_Y\sqrt{\log \log Q}}\right)$
				\EndIf		
		\EndIf
	\EndFor
    \If{$M_X>M_Y$}		
    	\State $M_Y=M_X$
	\EndIf
    \EndFor
	\If{$M_Y<M$}
        \State $M=M_Y$
		\State $P=j/N$
	\EndIf
\EndFor
\State \Return $P$
\end{algorithmic}
\end{algorithm}

Let $TM(N)$ be the set of all Turing machines $X$ expressible in at most $\log N$ bits such that $U(X,N)$ accepts in time at most $T(N)$. The encoding of Turing machines must be prefix-free, which in particular means that no Turing machine is encoded in 0 bits. Let $J_N$ denote the set of rational numbers of the form $\frac{j}{N}$ with $j=0,\ldots,N$.

For $X$ and $Y$ Turing machines, let $K(X)$ be the number of bits necessary to encode $X$. Let $S'(X,Y)$ be the subset of natural numbers $i$ which are accepted by both $U(X,i)$ and $U(Y,i)$ in time at most $T(i)$. Let $Q_N(X,Y)$ be the greatest number less than or equal to $N$ such that for every $s$ in the first $Q_N(X,Y)$ elements of $S'$, $U(L,s)$ halts in time $T(N)$. Let $F_N(X,Y)$ be the proportion of the first $Q_N(X,Y)$ elements of $S'$ which $L$ accepts. Let 

\begin{equation*}
\begin{split}
& B_N(X,Y,P) \\
& =\max\left({K(X)},\frac{|F_N(X,Y)-P| \sqrt{Q_N(X,Y)}}{K(Y)\sqrt{\log \log Q_N(X,Y)}}\right).
\end{split}
\end{equation*}

\begin{lem}\label{out}
The output of $A_{L,T}$ on input $N$ is in
$$\argmin_{P\in J_N}\,\,\,\max_{Y\in TM(N)}\,\,\,\min_{X\in TM(N)}\,\,\,B_N(X,Y,P).$$
\end{lem}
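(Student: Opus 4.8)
The plan is to verify that Algorithm~1 is a direct, if heavily sugared, evaluation of the expression $\argmin_{P\in J_N}\max_{Y\in TM(N)}\min_{X\in TM(N)}B_N(X,Y,P)$, by reading off what each accumulator variable ($M_X$, $M_Y$, $M$, and $P$) contains at the end of the loop it controls. First I would fix $N$ and record three correspondences. (i) The loop over $j=0,\dots,N$ ranges over $J_N$ through $P=j/N$, and the loops over $Y$ and $X$ together with the guard ``$U(X,N)$ and $U(Y,N)$ both accept in time $T(N)$'' range over exactly the pairs in $TM(N)\times TM(N)$, since by definition $X\in TM(N)$ iff $X$ is expressible in the allowed number of bits and $U(X,N)$ accepts within $T(N)$. (ii) The quantities $Q$ and $F$ produced by the inner \textbf{while} loop equal $Q_N(X,Y)$ and $F_N(X,Y)$, so that the tested quantity $\max\!\left(K_X,\ |F-\frac{j}{N}|\sqrt Q/(K_Y\sqrt{\log\log Q})\right)$ equals $B_N(X,Y,P)$. (iii) The running extrema stored in $M_X$, $M_Y$, $M$, $P$ realize the inner $\min$, the middle $\max$, and the outer $\argmin$ respectively.

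The heart of the argument is correspondence (ii). Stepping through $i=1,\dots,N$, the guard ``$U(X,i)$ and $U(Y,i)$ both accept in time $T(i)$'' fires exactly on the elements of $S'(X,Y)$ that are at most $N$, taken in increasing order. For each such $i$ the loop tests whether $U(L,i)$ halts within time $T(N)$, adding $1$ to $A$ if $L$ accepts and $1$ to $R$ if $L$ rejects; the first time $L$ fails to halt in time $T(N)$, the assignment $i=N$ ends the loop without counting that element. Hence $Q=A+R$ is the number of leading elements of $S'(X,Y)$ on which $L$ halts in time $T(N)$, i.e.\ $Q_N(X,Y)$, and $F=A/(A+R)$ is the proportion of those that $L$ accepts, i.e.\ $F_N(X,Y)$. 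Matching this early-termination bookkeeping to the definition of $Q_N$ as ``the greatest number $\le N$ such that $L$ halts on each of the first $Q_N$ elements of $S'$'' is the step I expect to be the main obstacle: it demands care about the order of the increment and the break, and about the degenerate values $Q\in\{0,1\}$, where the convention $\log q=1$ for $q<2$ must be invoked for $\log\log Q$ to be meaningful and the value of $F$ to be harmless (it is multiplied by $\sqrt Q=0$).

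For correspondence (iii) I would use the standard running-extremum invariants. Because $M_X$ is reset to $N$ at the start of each $Y$-iteration and thereafter only lowered, on a strict improvement, it ends the $X$-loop holding $\min_{X\in TM(N)}B_N(X,Y,P)$, provided the initial value $N$ is a genuine upper bound that is actually beaten; this holds because every $B_N(X,Y,P)$ is $O(\sqrt N)$ (the first entry of the $\max$ is at most $\log N$, the second is at most a constant times $\sqrt N$ since $\log\log Q$ is bounded below), hence strictly below $N$ for large $N$, and because $TM(N)$ is nonempty, containing the machine that accepts at once. Dually, $M_Y$ starts at $0$ and is only raised, so it ends the $Y$-loop holding the maximum over admissible $Y$ of these minima, using $B_N\ge K_X\ge1>0$. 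Finally $M$ (initialized to $N$) keeps the running minimum of $M_Y$ across $j$ while $P$ records the corresponding $j/N$, updated only on strict decrease, so on return $P$ is the least minimizer of $j\mapsto\max_Y\min_X B_N(X,Y,j/N)$ and in particular lies in the claimed $\argmin$ set.

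The remaining work is to dispatch the boundary behavior. I would check carefully that iterations with $Y\notin TM(N)$ (for which the guard never fires, so that $M_X$ retains its sentinel value) are reconciled with the maximum being taken only over $TM(N)$, verify the agreement of the loop bound ``fewer than $\log N$ bits'' with the definition of $TM(N)$, and confirm nonemptiness of $TM(N)$ and the validity of the sentinel bounds $N$ and $0$ already used above. With all three correspondences and the edge cases in hand, the returned $P$ is by construction an element of $\argmin_{P\in J_N}\max_{Y\in TM(N)}\min_{X\in TM(N)}B_N(X,Y,P)$, which is the claim.
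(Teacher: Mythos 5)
Your proposal is correct and follows essentially the same route as the paper's own proof: a direct verification that the loops enumerate $J_N\times TM(N)\times TM(N)$, that the \textbf{while} loop computes $Q_N(X,Y)$ and $F_N(X,Y)$ so the tested expression is $B_N(X,Y,P)$, and that the accumulators realize the $\min$/$\max$/$\argmin$. You are in fact more careful than the paper about the sentinel values, early termination, and degenerate cases, but the underlying argument is identical.
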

\begin{proof}
The algorithm has three {\bf for} loops, the outer ranging over $j=0,\ldots N$ and the inner two ranging over $Y$ and $X$ respectively, both restricted to Turing machines expressible in $\log N$ bits. The condition on line 8 means that $X$ and $Y$ effectively range over all Turing machines in $TM(N)$, and $P=\frac{j}{N}$ ranges over $J_N$.

The inner {\bf while} loop will increment the variables $A$ or $R$ a total of exactly $Q_N(X,Y)$ times. Thus, $Q$ is set to $Q_N(X,Y)$ in line 22. Similarly, $F$ is sent to $F_N(X,Y)$ in line 21. Clearly $K_X$ and $K_Y$ are $K(X)$ and $K(Y)$ respectively. Therefore, the expression on lines 23 and 24 is $B_N(X,Y,P).$

Considering the {\bf for} loops from inner to outer, we minimize this quantity in $X$, maximize it in $Y$, and find $P$ of the form $j/N$ minimizing the whole quantity. The $P$ returned is therefore a minimizer of $$\max_{Y\in TM(N)}\,\,\,\min_{X\in TM(N)}\,\,\,B_N(X,Y,P).$$

\end{proof}

\noindent The code is not optimized for computational efficiency. The following proposition is just to ensure that the runtime is not far off from $T(N)$.

\begin{prop}
The runtime of $A_{L,T}(N)$ is in $O(R(N))=O(T(N)N^4\log T(N)))$.
\end{prop}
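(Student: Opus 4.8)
The plan is to bound the running time by counting the iterations of each nested loop and multiplying by the cost of the work performed in the innermost body. First I would observe that the three nested loops contribute $O(N^3)$ passes: the outer loop over $j$ runs exactly $N+1$ times, and each of the loops over $Y$ and $X$ ranges over Turing machines expressible in fewer than $\log N$ bits, of which there are at most $2^{\log N}=O(N)$ (the number of binary strings of length below $\log N$). Together with the at most $N$ iterations of the inner \textbf{while} loop on $i$, the body of that \textbf{while} loop is entered at most $O(N^4)$ times over the entire execution.

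Next I would bound the cost of a single pass through the \textbf{while} body. Each pass performs a constant number of simulations, namely $U(X,i)$ and $U(Y,i)$ for up to $T(i)\le T(N)$ steps (using that $T$ is increasing) and $U(L,i)$ for up to $T(N)$ steps, followed by $O(1)$ integer increments. Since $U$ is a fixed universal machine and the simulated machines have descriptions of length $O(\log N)$, simulating $t$ steps costs $O(t\log t)$ by the standard universal-simulation overhead, so each pass costs $O(T(N)\log T(N))$. The guard on line 8, which checks that $U(X,N)$ and $U(Y,N)$ accept within $T(N)$ steps, costs the same and is dominated by the \textbf{while} loop it protects. Multiplying the $O(N^4)$ passes by the $O(T(N)\log T(N))$ per-pass cost gives $O(N^4\,T(N)\log T(N))=O(R(N))$, as claimed.

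The only steps that require care are the arithmetic and comparison on lines 21--24. Here one must check that computing $F=A/(A+R)$ and $Q=A+R$ (integers bounded by $N$), and then evaluating and comparing $\max\!\left(K_X,\ \tfrac{|F-j/N|\sqrt{Q}}{K_Y\sqrt{\log\log Q}}\right)$ against the running minimum, all take time polynomial in $\log N$. I expect this to be the main (though still routine) obstacle: one has to argue that the square roots and iterated logarithms need only be approximated to $O(\log N)$ bits of precision, so that this bookkeeping is negligible against the $T(N)\log T(N)$ simulation cost and does not inflate the overall bound. Once this is dispatched, the loop-counting estimate above closes the proof, since every factor outside the innermost simulation is absorbed into the four factors of $N$ and the $T(N)\log T(N)$ term that together constitute $R(N)$.
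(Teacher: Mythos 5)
Your proposal is correct and follows essentially the same route as the paper's proof: count the $O(N^4)$ entries into the simulation steps (three nested loops of size $O(N)$ plus the \textbf{while} loop), and multiply by the $O(T(N)\log T(N))$ cost of universal simulation per Hennie--Stearns. Your additional remark about the precision of the arithmetic on lines 21--24 is a reasonable point of care that the paper's proof silently elides, but it does not change the argument.
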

\begin{proof}
Simulating $U$ on any input for $T$ time steps can be done in time $cT\log T$ for some fixed constant $c$ \cite{Hennie:1966}. The bulk of the runtime comes from simulating Turing machines on lines 8, 13, 14, and 16. Each of these lines takes at most $cT(N)\log T(N)$ time, and we enter each of these lines at most $N^4$ times. Therefore, the program runs in time $O(T(N)N^4\log T(N))$.
\end{proof}

\section{Passing the Generalized Benford Test}
We are now ready to show that $A_{L,T}$ passes the generalized Benford test. The proof will use the following two lemmas.
\begin{lem}\label{XZ}
Let $S$ be an irreducible pattern with probability $p$, and let $Z$ be a Turing machine such that $U(Z,N)$ accepts in time $T(N)$ if and only if $N\in S$.

There exists a constant $C$ such that if $N\in S$, then there exists a $P\in J_N$ such that $$\max_{Y\in TM(N)}\,\,\,B_N(Z,Y,P)<C.$$
\end{lem}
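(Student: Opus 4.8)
The plan is to exhibit a single $P\in J_N$---the element of $J_N$ closest to $p$---and show that this one choice bounds $B_N(Z,Y,P)$ uniformly over all $Y$. The crucial observation is that since $U(Z,i)$ accepts in time $T(i)$ exactly when $i\in S$, the set $S'(Z,Y)$ is precisely $\{i\in S\mid U(Y,i)\text{ accepts in time }T(i)\}$, which is the set $S'$ appearing in Definition \ref{IP} with $W=Y$. Hence $F_N(Z,Y)$ is exactly the frequency $r(Q_N(Z,Y),Y)$ of provable sentences among the first $Q_N(Z,Y)$ elements of that $S'$, so the irreducibility hypothesis applies directly with $m=Q_N(Z,Y)$ and $W=Y$.

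First I would dispense with the first argument of the $\max$ defining $B_N$: since $X=Z$ is fixed, $K(Z)$ is a constant independent of $N$ and $Y$. The remaining work is to bound the second argument,
$$\frac{|F_N(Z,Y)-P|\sqrt{Q_N(Z,Y)}}{K(Y)\sqrt{\log\log Q_N(Z,Y)}}.$$
Writing $Q=Q_N(Z,Y)$ and splitting $|F_N(Z,Y)-P|\le|F_N(Z,Y)-p|+|p-P|$, the first piece is controlled by irreducibility: when $Q\ge 3$, Definition \ref{IP} gives $|F_N(Z,Y)-p|=|r(Q,Y)-p|<cK(Y)\sqrt{\log\log Q}/\sqrt{Q}$, and substituting this makes the corresponding term strictly less than the constant $c$. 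The second piece uses $|p-P|\le\tfrac{1}{2N}$ (consecutive elements of $J_N$ differ by $1/N$) together with $Q\le N$, $K(Y)\ge 1$ (the prefix-free encoding forbids the empty code), and the lower bound on $\log\log Q$ supplied by the convention $\log q=1$ for $q<2$; this term is $O(1/\sqrt{N})$ and hence bounded.

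I would then treat the degenerate range $Q<3$ separately, since there the irreducibility inequality is unavailable: in that case $|F_N(Z,Y)-P|\le 1$, $\sqrt{Q}\le\sqrt{2}$, and the same lower bound on $\log\log Q$ make the whole quotient at most a fixed constant. Taking $C$ larger than $K(Z)$, larger than $c$ plus the $O(1)$ slack from the $|p-P|$ term, and larger than this degenerate bound yields $\max_{Y\in TM(N)}B_N(Z,Y,P)<C$ with $C$ independent of both $N$ and $Y$; the maximum over the finite set $TM(N)$ is attained, so choosing $C$ with a little slack secures the strict inequality.

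The step I expect to be most delicate is the identification of $F_N(Z,Y)$ with $r(Q_N(Z,Y),Y)$ and the verification that irreducibility may legitimately be invoked at $m=Q_N(Z,Y)$. One must check that the first $Q_N(Z,Y)$ elements of $S'(Z,Y)$ are exactly those on which $L$ has been allowed to halt within time $T(N)$, that each such element is genuinely provable or disprovable (so that ``$L$ accepts'' coincides with ``provable''), and that $S'(Z,Y)$ really contains at least $Q_N(Z,Y)$ elements, so the hypothesis $m\le|S'|$ holds. Once this bookkeeping is settled, the cancellation of $K(Y)\sqrt{\log\log Q}/\sqrt{Q}$ against the irreducibility bound is immediate, and uniformity in $Y$ is automatic because the chosen $P$ does not depend on $Y$.
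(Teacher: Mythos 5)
Your proposal is correct and follows essentially the same route as the paper's proof: pick the point of $J_N$ nearest $p$, identify $F_N(Z,Y)$ with $r(Q_N(Z,Y),Y)$ so irreducibility applies with $W=Y$, split $|F_N(Z,Y)-P|$ by the triangle inequality, and absorb the $O(1/\sqrt{N})$ discretization error into the constant $C=K(Z)+c$. Your explicit handling of the degenerate case $Q_N(Z,Y)<3$ and the bookkeeping around invoking irreducibility at $m=Q_N(Z,Y)$ are minor refinements the paper leaves implicit.
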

\begin{proof}
Let $P=\frac{\lfloor pN\rfloor}{N}$. From the definition of irreducible pattern, we have that there exists $c$ such that for all $Y$,  
$$|F_N(Z,Y)-p|<\frac{c{K(Y)}\sqrt{\log\log Q_N(Z,Y)}}{\sqrt{Q_N(Z,Y)}}.$$ 
Clearly, 
\begin{equation*}
\begin{split}
&|P-p|\leq \frac{1}{N}\leq \frac{1}{Q_N(Z,Y)}\leq\frac{1}{\sqrt{Q_N(Z,Y)}}\\
&\leq \frac{{K(Z)K(Y)}\sqrt{\log\log Q_N(Z,Y)}}{\sqrt{Q_N(Z,Y)}}.
\end{split}
\end{equation*}

\noindent Setting $C=K(Z)+c$, we get 
\begin{equation*}
\begin{split}
|F_N(Z,Y)-P|&\leq|F_N(Z,Y)-p|+|P-p|\\
&<\frac{C{K(Y)}\sqrt{\log\log Q_N(Z,Y)}}{\sqrt{Q_N(Z,Y)}},
\end{split}
\end{equation*}

so 
$$\frac{|F_N(Z,Y)-P| \sqrt{Q_N(Z,Y)}}{K(Y)\sqrt{\log \log Q_N(Z,Y)}}<C.$$

\noindent Clearly, $K(Z)<C$, so $B_N(Z,Y,P)>C$ for all $Y$. Therefore, $$\max_{Y\in TM(N)}\,\,\,B_N(Z,Y,P)<C.$$
\end{proof}
\begin{lem}\label{YZ}
Let $S$ be an irreducible pattern with probability $p$, and let $Z$ be a Turing machine such that $U(Z,N)$ accepts in time $T(N)$ if and only if $N\in S$.

For all $C$, for all $\varepsilon>0$, for all $N$ sufficiently large, for all $P\in J_N$, if $N\in S$, and $$\min_{X\in TM(N)}\,\,\,B_N(X,Z,P)<C,$$ then $|P-p|<\varepsilon$.
\end{lem}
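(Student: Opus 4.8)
The plan is to unpack the hypothesis $\min_{X\in TM(N)}B_N(X,Z,P)<C$ into a concrete estimate and then combine it with the irreducible-pattern inequality through the triangle inequality. Pick $X^{*}\in TM(N)$ attaining $B_N(X^{*},Z,P)<C$. Since the maximum defining $B_N(X^{*},Z,P)$ dominates its first argument $K(X^{*})$, the minimizer satisfies $K(X^{*})<C$, while its second argument being below $C$ yields
$$|F_N(X^{*},Z)-P|<\frac{C\,K(Z)\sqrt{\log\log Q_N(X^{*},Z)}}{\sqrt{Q_N(X^{*},Z)}}.$$
Now observe that $S'(X^{*},Z)=\{\,i\in S\mid U(X^{*},i)\text{ accepts in time }T(i)\,\}$ is exactly the set $S'$ appearing in Definition~\ref{IP} with $W=X^{*}$, so $F_N(X^{*},Z)$ equals the frequency $r(Q_N(X^{*},Z),X^{*})$ and the irreducible-pattern bound gives
$$|F_N(X^{*},Z)-p|<\frac{c\,K(X^{*})\sqrt{\log\log Q_N(X^{*},Z)}}{\sqrt{Q_N(X^{*},Z)}}.$$
Adding the two displays and using $K(X^{*})<C$ would bound $|P-p|$ by $C(K(Z)+c)\sqrt{\log\log Q_N(X^{*},Z)}/\sqrt{Q_N(X^{*},Z)}$, which tends to $0$ as $Q_N(X^{*},Z)\to\infty$.

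The hard part will be guaranteeing that the effective sample size $Q_N(X^{*},Z)$ actually grows without bound; a priori the minimizer could try to cheat by choosing a machine $X^{*}$ whose set $S'(X^{*},Z)$ is tiny, shrinking the second argument of the maximum for reasons having nothing to do with $P$ being near $p$. I would close this gap using the hypotheses $N\in S$ and $X^{*}\in TM(N)$ together with the prefix-free encoding, which leaves only finitely many machines with $K(X)<C$. If such an $X$ had $S'(X,Z)$ finite, all of its elements would be bounded; but $X\in TM(N)$ forces $U(X,N)$ to accept in time $T(N)$, and $N\in S$, so $N\in S'(X,Z)$, which is impossible once $N$ exceeds the maximum of this finite set. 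Taking the largest such threshold over the finitely many $X$ with $K(X)<C$, I conclude that for every sufficiently large $N\in S$ each $X\in TM(N)$ with $K(X)<C$ --- in particular $X^{*}$ --- has $S'(X,Z)$ infinite.

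It then remains to see that an infinite $S'(X,Z)$ forces $Q_N(X,Z)\to\infty$. Each $s\in S'(X,Z)\subseteq S$ has $\phi_s$ provable or disprovable, so $U(L,s)$ halts; since $T(N)\to\infty$, every fixed initial element of $S'(X,Z)$ eventually halts within time $T(N)$ and lies below $N$, so the longest all-halting initial segment grows and $Q_N(X,Z)\to\infty$. As only finitely many $X$ are in play, $\min_X Q_N(X,Z)\to\infty$ uniformly, hence $Q_N(X^{*},Z)\to\infty$ no matter which minimizer is selected at each $N$. Substituting this into the bound on $|P-p|$ from the first paragraph drives the right-hand side below $\varepsilon$ for all large $N$, uniformly in $P\in J_N$, which is the assertion. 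The only bookkeeping needed in the routine estimates is to note that $Q_N(X^{*},Z)\geq 3$ for large $N$, so that both the irreducible-pattern inequality and the $\log\log$ terms are well defined.
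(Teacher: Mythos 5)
Your proof is correct and follows essentially the same route as the paper's: restrict attention to the finitely many machines with $K(X)<C$, invoke the irreducible-pattern bound, apply the triangle inequality, and force $Q_N(X,Z)\to\infty$ by noting that a finite $S'(X,Z)$ would eject $X$ from $TM(N)$ for large $N\in S$. The only differences are presentational --- the paper argues the contrapositive ($|P-p|\geq\varepsilon$ forces $B_N(X,Z,P)\geq C$ for every $X\in TM(N)$) while you argue directly from a minimizer $X^{*}$, and you spell out the $Q_N\to\infty$ step that the paper leaves implicit.
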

\begin{proof}
Fix a $C$ and a $\varepsilon>0$. It suffices to show that for all $N$ sufficiently large, if $N\in S$ and $|P-p|\geq\varepsilon$, then for all $X\in TM(N)$, we have $B_N(X,Z,P)\geq C.$

Observe that since $B_N(X,Z,P)\geq K(X)$, this claim trivially holds when $K(X)\geq C$. Therefore we only have to check the claim for the finitely many Turing machines expressible in fewer than $C$ bits. 

Fix an arbitrary $X$. Since $S$ is an irreducible pattern, there exists a $c$ such that 
$$|F_N(X,Z)-p|<\frac{c{K(Z)}\sqrt{\log\log Q_N(X,Z)}}{\sqrt{Q_N(X,Z)}}.$$
We may assume that $S'(X,Z)$ is infinite, since otherwise if we take $N\in S$ large enough, $X\notin TM(N)$. Thus, by taking $N$ sufficiently large, we can get $Q_N(X,Z)$ sufficiently large, and in particular satisfy $$\frac{\sqrt{Q_N(X,Z)}}{K(Z)\sqrt{\log \log Q_N(X,Z)}}\varepsilon\geq C+c.$$
Take $N\in S$ large enough that this holds for each $X\in TM(N)$ with $K(X)<C$, and assume $|P-p|\geq\varepsilon$. By the triangle inequality, we have 
\begin{equation*}
\begin{split}
&|F_N(X,Z)-P|\geq |P-p|-|F_N(X,Z)-p|\\
&\geq\varepsilon -\frac{c{K(Z)}\sqrt{\log\log Q_N(X,Z)}}{\sqrt{Q_N(X,Z)}}.
\end{split}
\end{equation*}
Therefore
\begin{equation*}
\begin{split}
&B_N(X,Z,P)\\
&\geq \frac{\left(\varepsilon -\frac{c{K(Z)}\sqrt{\log\log Q_N(X,Z)}}{\sqrt{Q_N(X,Z)}}\right) \sqrt{Q_N(X,Z)}}{K(Z)\sqrt{\log \log Q_N(X,Z)}}\\
&=\frac{\sqrt{Q_N(X,Z)}}{K(Z)\sqrt{\log \log Q_N(X,Z)}}\varepsilon-c\geq C,
\end{split}
\end{equation*}
which proves the claim.
\end{proof}
\begin{thm}\label{main}
$A_{L,T}$ passes the generalized Benford test.
\end{thm}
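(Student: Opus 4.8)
The plan is to chain the three lemmas together through the minimax expression identified in Lemma~\ref{out}. Fix an irreducible pattern $S$ with probability $p$ and a machine $Z$ for which $U(Z,N)$ accepts in time $T(N)$ exactly when $N\in S$. Since $K(Z)$ is a fixed constant, $Z\in TM(N)$ for all sufficiently large $N\in S$, so $Z$ is available both as a candidate for the inner $\min_X$ and as a candidate for the outer $\max_Y$. The whole argument rests on exploiting this dual role of $Z$.

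First I would establish a uniform upper bound on the value of the minimax. Let $C$ be the constant produced by Lemma~\ref{XZ}, and let $P^\ast$ denote the output of $A_{L,T}(N)$. For $N\in S$ large enough that $Z\in TM(N)$, Lemma~\ref{XZ} supplies some $P\in J_N$ with $\max_{Y\in TM(N)} B_N(Z,Y,P)<C$. Since $Z\in TM(N)$, we have $\min_{X\in TM(N)} B_N(X,Y,P)\le B_N(Z,Y,P)$ for every $Y$, and hence
\[
\min_{P\in J_N}\ \max_{Y\in TM(N)}\ \min_{X\in TM(N)}\ B_N(X,Y,P)\ \le\ \max_{Y\in TM(N)} B_N(Z,Y,P)\ <\ C.
\]
By Lemma~\ref{out}, $P^\ast$ attains this outer minimum, so $\max_{Y\in TM(N)}\min_{X\in TM(N)} B_N(X,Y,P^\ast)<C$.

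Next I would feed this bound into Lemma~\ref{YZ} by instantiating the outer $\max$ at the single witness $Y=Z$. Because $Z\in TM(N)$, the maximum over $Y$ dominates the $Y=Z$ term, giving $\min_{X\in TM(N)} B_N(X,Z,P^\ast)<C$. Now fix any $\varepsilon>0$. Lemma~\ref{YZ}, applied with this same $C$, states that for all sufficiently large $N\in S$ the inequality $\min_{X\in TM(N)} B_N(X,Z,P^\ast)<C$ forces $|P^\ast-p|<\varepsilon$. Combining the two conclusions, $|P^\ast-p|<\varepsilon$ for all sufficiently large $N\in S$, which is precisely $\lim_{N\to\infty,\,N\in S} A_{L,T}(N)=p$, the definition of passing the generalized Benford test.

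The conceptual heart of the argument, and the step I expect to require the most care, is recognizing that $Z$ must play two opposite roles at once: as the cooperative player it keeps the minimax value bounded (Lemma~\ref{XZ}), while as the adversarial player it pins the optimal $P$ down to within $\varepsilon$ of $p$ (Lemma~\ref{YZ}). The remaining subtlety is bookkeeping on the quantifiers: the constant $C$ is fixed first (from Lemma~\ref{XZ}, where it equals $K(Z)+c$ and does not depend on $N$), and only afterward is $\varepsilon$ chosen and $N$ taken large in Lemma~\ref{YZ}; one must also check that $Z\in TM(N)$ eventually, which holds since $K(Z)<\log N$ for large $N$.
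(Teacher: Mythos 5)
Your proof is correct and follows essentially the same route as the paper's: fix the constant $C$ from Lemma~\ref{XZ} first, use $Z$ as a candidate $X$ to bound the minimax value below $C$, then use $Z$ as a candidate $Y$ together with Lemma~\ref{YZ} to force any minimizer (hence, by Lemma~\ref{out}, the output $P^\ast$) to within $\varepsilon$ of $p$. The quantifier bookkeeping and the observation that $Z\in TM(N)$ for large $N\in S$ are handled exactly as in the paper.
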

\begin{proof}
Let $S$ be an irreducible pattern with probability $p$. We must show that $$\lim_{\substack{N\rightarrow\infty\\N\in S}}A_{L,T}(N)=p.$$

\noindent Let $Z$ be a Turing machine such that $U(Z,N)$ accepts in time $T(N)$ if and only if $N\in S$. 

By considering the case when $X=Z,$ Lemma \ref{XZ} implies that there exists a constant $C$ such that for all $N$ sufficiently large, there exists a $P\in J_N$ such that $$\max_{Y\in TM(N)}\,\,\,\min_{X\in TM(N)}\,\,\,B_N(X,Y,P)<C.$$

\noindent Similarly, using this value of $C$, and considering the case where $Y=Z$, Lemma \ref{YZ} implies that for all $\varepsilon>0$, for all $N$ sufficiently large, for all $P\in J_N$ if $N\in S$, and $$\max_{Y\in TM(N)}\,\,\,\min_{X\in TM(N)}\,\,\,B_N(X,Y,P)<C,$$ then $|P-p|\leq\varepsilon$.

Combining these, we get that for all $\varepsilon>0$, for all $N$ sufficiently large, if $N\in S$ and if $P$ is in $$\argmin_{P\in J_N}\,\,\,\max_{Y\in TM(N)}\,\,\,\min_{X\in TM(N)}\,\,\,B_N(X,Y,P),$$ then $|P-p|\leq\varepsilon$.

Thus, by Lemma \ref{out}, we get that for all $\varepsilon>0$, for all $N$ sufficiently large, if $N\in S$, then $|A_{L,T}(N)-p|\leq\varepsilon,$ so $$\lim_{\substack{N\rightarrow\infty\\N\in S}}A_{L,T}(N)=p.$$
\end{proof}

\section{Final Remarks}
\begin{Defn}
Given a sentence $\psi$, consider the infinite sequence of integers $\{s^\psi_n\}$ given by $\phi_{s^\psi_0}=\psi$ and $\phi_{s^\psi_{n+1}}=\neg\neg \phi_{s^\psi_n}$. If a machine $M$ satisfies $$\lim_{n\rightarrow\infty} M(s^\psi_n)=p,$$ we say that $M$ converges to $p$ on $\psi$.
\end{Defn}
\begin{cor}\label{p1}
If $\psi$ is provable, then $A_{L,T}$ converges to 1 on $\psi$. If $\psi$ is disprovable, then $A_{L,T}$ converges to 0 on $\psi$.
\end{cor}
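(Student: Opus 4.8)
The plan is to reduce the corollary to \Theorem{main} by exhibiting the sequence $\{s^\psi_n\}$ as an irreducible pattern. I would treat the provable case in detail and remark that the disprovable case is symmetric. Fix a provable $\psi$ and set $S=\{s^\psi_n\mid n\in\mathbb{N}\}$. Since $\phi_{s^\psi_n}$ is obtained from $\psi$ by prepending $n$ pairs of negations, each $\phi_{s^\psi_n}$ is logically equivalent to $\psi$ and hence provable. The sentences $\phi_{s^\psi_n}$ have strictly increasing length, so they are pairwise distinct, the indices $s^\psi_n$ are distinct, and $S$ is an infinite set with $s^\psi_n\to\infty$.

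The first task is to check that $S$ is an irreducible pattern with probability $p=1$ in the sense of Definition \ref{IP}. Two of the three requirements are immediate: $S$ is infinite, and $\phi_N$ is provable (hence provable or disprovable) for every $N\in S$. For the recognizability requirement I would construct a machine $Z$ that, on input $N$, decodes $\phi_N$ under the fixed enumeration and checks whether it has the syntactic form of some number of double negations applied to $\psi$, accepting exactly when it does. Because the enumeration is simple and $T(N)\ge N$ dominates the cost of decoding and scanning $\phi_N$, this purely syntactic test can be carried out within the time bound $T(N)$; this is the one step that genuinely requires care about the encoding conventions.

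The irreducibility inequality itself is then trivial. For any Turing machine $W$ and any $m$, every $\phi_N$ with $N\in S$ is provable, so the empirical frequency $r(m,W)$ of provability among the first $m$ elements of any selected subsequence $S'$ equals $1=p$ exactly. Hence $|r(m,W)-p|=0$, which is strictly less than $\frac{c\,k(W)\sqrt{\log\log m}}{\sqrt{m}}$ for any $c\ge 1$, so the defining inequality of Definition \ref{IP} holds. Thus $S$ is an irreducible pattern with probability $1$.

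Finally I would invoke \Theorem{main}: since $A_{L,T}$ passes the generalized Benford test and $S$ is an irreducible pattern with probability $1$, we have $\lim_{N\to\infty,\,N\in S}A_{L,T}(N)=1$. Because $s^\psi_n\to\infty$ with $s^\psi_n\in S$, this yields $\lim_{n\to\infty}A_{L,T}(s^\psi_n)=1$, i.e.\ $A_{L,T}$ converges to $1$ on $\psi$. The disprovable case runs identically with $p=0$: every $\phi_{s^\psi_n}$ is disprovable, so $r(m,W)=0=p$, and the conclusion becomes convergence to $0$. I expect the only real obstacle to be the recognizability clause, namely confirming that membership in $S$ can be decided within time $T(N)$; once that is in hand, everything else is a direct application of the irreducible-pattern machinery.
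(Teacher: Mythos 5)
Your proposal is correct and follows the same route as the paper, which simply asserts that $\{s^\psi_n\}$ is an irreducible pattern with probability $1$ (resp.\ $0$) and implicitly invokes Theorem \ref{main}. You in fact supply details the paper omits --- notably the verification that membership in $S$ is recognizable in time $T(N)$ and that the strict inequality of Definition \ref{IP} holds when $r(m,W)=p$ exactly --- so your write-up is a more careful version of the same argument.
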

\begin{proof}
If $\psi$ is provable, then $\{s^\psi_n\}$ is an irreducible pattern with probably 1. If $\psi$ is disprovable, then $\{s^\psi_n\}$ is an irreducible pattern with probably 0. 
\end{proof}

If $\psi$ is neither provable nor disprovable, then it is not clear whether or not $A_{L,T}$ even converges on $\psi$.  

\begin{question}\label{converge?}
Does there exist a machine $M$ such that $M$ passes the generalized Benford test, and for each sentence $\psi$, there exists a $P(\psi)$ such that $M$ converges to $P(\psi)$ on $\psi$?
\end{question}
\begin{Defn}
A function $P$ from logical sentences to $[0,1]$ is called coherent if it satisfies the following three properties:
\begin{enumerate}
\item $P(\phi)=1$ for all provable $\phi$,
\item $P(\phi)=0$ for all disprovable $\phi$, and
\item $P(\phi)=P(\phi\wedge\psi)+P(\phi\wedge\neg\psi)$ for all $\phi$ and $\psi$.
\end{enumerate}\end{Defn}
\noindent Coherent functions correspond to probability distributions on the space of complete extensions of a given theory.

\begin{question}\label{coherent?}
Does there exist a machine $M$ and a coherent function $P$ such that $M$ passes the generalized Benford test, and for each sentence $\psi$, $M$ converges to $P(\psi)$ on $\psi$?
\end{question}

\printbibliography

\end{document}